\def\colorful{0}
\newif\ifhyper\IfFileExists{hyperref.sty}{\hypertrue}{\hyperfalse}
\ifhyper\usepackage{hyperref}\fi
\def\nnewcolor{1}
\newcommand{\new}[1]{{\color{red} #1}}
\newcommand{\new}[1]{{#1}}
\newcommand{\tr}{\mathrm{tr}}
\newtheorem{theorem}{Theorem}[section]
\newtheorem{informal theorem}[theorem]{Theorem (informal statement)}
\newtheorem{proposition}[theorem]{Proposition}
\newtheorem{fact}[theorem]{Fact}
\theoremstyle{definition}
\newtheorem{definition}[theorem]{Definition}
\newcommand{\eqdef}{\stackrel{{\mathrm {\footnotesize def}}}{=}}
\newcommand{\R}{\mathbb{R}}
\newcommand{\Z}{\mathbb{Z}}
\newcommand{\E}{\mathbf{E}}
\newcommand{\eps}{\epsilon}
\newcommand{\pr}{\mathbf{Pr}}
\renewcommand{\Pr}{\mathbf{Pr}}
\newcommand{\poly}{\mathrm{poly}}
\newcommand{\sgn}{\mathrm{sign}}
\newcommand{\opt}{\mathrm{OPT}}
\newcommand{\D}{\mathcal{D}}
\newcommand{\littlesum}{\mathop{\textstyle \sum}}
\title{Forster Decomposition and Learning Halfspaces with Noise}
\author{
Ilias Diakonikolas\thanks{Supported by NSF Award CCF-1652862 (CAREER) and a Sloan Research Fellowship.}\\
University of Wisconsin-Madison\\
{\tt ilias@cs.wisc.edu}\\
\and
Daniel M. Kane\thanks{Supported by NSF Award CCF-1553288 (CAREER) and a Sloan Research Fellowship.}\\
University of California, San Diego\\
{\tt dakane@cs.ucsd.edu}\\
\and
Christos Tzamos\\
University of Wisconsin-Madison\\
{\tt tzamos@cs.wisc.edu}
}
\begin{document}

\maketitle

\begin{abstract}
A Forster transform is an operation that turns a distribution into one with good anti-concentration properties. 
While a Forster transform does not always exist, we show that any distribution can be efficiently 
decomposed as a disjoint mixture of few distributions for which a Forster transform exists 
and can be computed efficiently. As the main application of this result, we obtain the first 
polynomial-time algorithm for distribution-independent PAC learning of halfspaces 
in the Massart noise model with strongly polynomial sample complexity, 
i.e., independent of the bit complexity of the examples. Previous algorithms for this learning problem 
incurred sample complexity scaling polynomially with the bit complexity, 
even though such a dependence is not information-theoretically necessary.
\end{abstract}

\setcounter{page}{0}

\thispagestyle{empty}

\newpage

\section{Introduction} \label{sec:intro}

\subsection{Background and Motivation} \label{ssec:background}
The motivating application for this paper is the problem of (distribution-independent)
PAC learning of halfspaces in the presence of label noise, and more specifically in the
Massart (or bounded noise) model.
Recent work~\cite{DGT19} obtained the first computationally efficient learning algorithm
with non-trivial error guarantee for this problem. Interestingly,
the sample complexity of the~\cite{DGT19} algorithm scales polynomially
with the {\em bit complexity of the examples} (in addition, of course,
to the dimension and the inverse of desired accuracy). This bit-complexity dependence in the
sample complexity is an artifact of the algorithmic approach in~\cite{DGT19}.
Information-theoretically, no such dependence is needed --- alas, the standard
VC-dimension-based sample upper bound~\cite{Massart2006} is non-constructive.
Motivated by this qualitative gap in our understanding, here we develop a methodology
that leads to a computationally efficient learning algorithm for Massart halfspaces
(matching the error guarantee of~\cite{DGT19}) with ``strongly polynomial'' sample complexity,
i.e., sample complexity completely independent of the bit complexity of the examples.

%Before we describe our contributions in detail, we require some context.

\paragraph{Halfspaces and Efficient Learnability}
We study the binary classification setting,
where the goal is to learn a Boolean function from random
labeled examples with noisy labels. Our focus is on the problem of learning
{\em halfspaces} in Valiant's PAC learning model~\cite{val84} when the labels have
been corrupted by {\em Massart noise}~\cite{Massart2006}.

A halfspace is any function $h: \R^d \to \{ \pm 1\}$
of the form $h(x) = \sgn(w \cdot x - \theta)$, where the vector $w \in \R^d$
is called the weight vector, $\theta \in \R$ is called the threshold,
and $\sgn: \R \to \{\pm 1\}$ is defined by $\sgn(t) = 1$ if $t \geq 0$ and $\sgn(t) = -1$ otherwise.
Halfspaces (or Linear Threshold Functions) are a central concept class in learning theory,
starting with early work in the 1950s~\cite{Rosenblatt:58, Novikoff:62, MinskyPapert:68}
and leading to fundamental and practically important techniques~\cite{Vapnik:98, FreundSchapire:97}. 
Learning halfspaces is known to be efficiently solvable without noise
(see, e.g.,~\cite{MT:94}) and computationally hard with adversarial
(aka, agnostic) noise~\cite{GR:06, FGK+:06short, Daniely16}.
The Massart model is a natural compromise, in the sense that it is a realistic noise model
that may allow for efficient algorithms without distributional assumptions. 
An essentially equivalent formulation of this model was already defined in the 80s by Sloan and
Rivest~\cite{Sloan88, Sloan92, RivestSloan:94, Sloan96}, and a very similar definition
had been considered even earlier by Vapnik~\cite{Vapnik82}.

\begin{definition}[Massart Noise] \label{def:massart-learning}
Let $\mathcal{C}$ be a class of Boolean functions over $X= \R^d$, $\D_{X}$
be a distribution over $X$, and $0 \leq \eta <1/2$.
Let $f$ be an unknown target function in $\mathcal{C}$.
A {\em noisy example oracle}, $\mathrm{EX}^{\mathrm{Mas}}(f, \D_{X}, \eta)$,
works as follows: Each time $\mathrm{EX}^{\mathrm{Mas}}(f, \D_{X}, \eta)$ is invoked,
it returns a labeled example $(x, y)$, where $x \sim \D_{X}$, $y = f(x)$ with
probability $1-\eta(x)$ and $y = -f(x)$ with probability $\eta(x)$,
for an {\em unknown} parameter $\eta(x) \leq \eta$.
Let $\D$ denote the joint distribution on $(x, y)$ generated by the above oracle.
The learner is given i.i.d.\ samples from $\D$
and wants to output a hypothesis $h$ such that with high probability
the error $\pr_{(x, y) \sim \D} [h(x) \neq y]$ is small.
\end{definition}

In recent years, the algorithmic task of learning halfspaces in the Massart model
has attracted significant attention in the learning theory community. 
In the {\em distribution-specific} PAC model, where structural assumptions are imposed 
on the marginal distribution $\D_{X}$, the work of \cite{AwasthiBHU15} initiated 
the study of learning homogeneous halfspaces with Massart noise. 
Since this work, a sequence of papers~\cite{AwasthiBHZ16, YanZ17, ZhangLC17, BZ17, DKTZ20,ZSA20,ZL21} developed efficient algorithms for this problem, 
achieving error $\opt+\eps$, for various classes of distributions, including log-concave distributions.

In the {\em distribution-independent} setting, where no assumptions are made on 
the distribution $\D_{X}$, progress has been slow.
The existence of an efficient PAC learning algorithm for Massart halfspaces
had been a long-standing open question
posed in a number of works~\cite{Sloan88, Cohen:97, Blum03}, with no 
algorithmic progress until recently.
Recent work \cite{DGT19} gave the first polynomial-time algorithm 
with non-trivial error guarantees for this problem, specifically achieving error $\eta+\eps$,
where $\eta$ is the upper bound on the noise rate. Since the dissemination of~\cite{DGT19},
a number of works have studied the complexity of learning halfspaces and other concept classes 
in the presence of Massart noise.
\cite{ChenKMY20} gave a proper learning algorithm for Massart halfspaces 
matching the error guarantee of~\cite{DGT19}. On the lower bound side,
hardness results were established for both exact~\cite{ChenKMY20}
and approximate learning~\cite{DK20-hardness}. Specifically,~\cite{DK20-hardness} 
gave a Statistical Query (SQ) lower bound ruling out efficient SQ learning algorithms achieving 
{\em any} polynomial relative approximation for Massart halfspaces. 
Interestingly, the \cite{DGT19} approach also motivated the design of the 
first boosting algorithm for Massart PAC learning~\cite{DIK+21-boost}.
The boosting algorithm of~\cite{DIK+21-boost} efficiently achieves error $\eta+\eps$, 
for any concept class, starting from a black-box Massart weak learner.

To motivate our results, we state the guarantees of the ~\cite{DGT19} algorithm in more detail.
(The proper algorithm of~\cite{ChenKMY20} builds on the same ideas and has qualitatively
similar guarantees.) Let $\D_{X} \subset \R^d$ be the marginal distribution on the examples and $b$ be an upper
bound on the bit complexity of points $x$ in the support of $\D_X$. Then, the algorithm of~\cite{DGT19}
requires $n = \poly(d, b, 1/\eps)$ labeled examples, runs in time $\poly(n, b)$ and achieves misclassification
error $\eta+\eps$. The dependence on $b$ in the runtime is likely to be inherent.
(Learning halfspaces {\em without} noise amounts to solving a general linear program (LP);
removing the $b$ dependence from the runtime would yield a strongly polynomial
algorithm for general LP.) On the other hand, there is no a priori reason to believe that the $\poly(b)$
dependence is needed in the sample complexity. In fact, it is known~\cite{Massart2006}
that $\poly(d/\eps)$ samples {\em information-theoretically suffice} to achieve optimal misclassification error.
Of course, this sample complexity bound is non-constructive, in the sense that 
the sample upper bound argument does not yield a sub-exponential time learning algorithm.
The above discussion motivates the following natural question:
\begin{center}
{\em Is there an efficient learning algorithm for Massart halfspaces
using only $\poly(d/\eps)$ samples?}
\end{center}
The main result of this paper provides an affirmative answer to this question.

\subsection{Our Results and Techniques} \label{ssec:results}

The main learning theory result of this paper is the following.

\begin{theorem}[Main Learning Result] \label{thm:main-informal}
There is an algorithm that for all $0 < \eta < 1/2$, on input
a set of \new{$n = \poly(d, 1/\eps)$}  i.i.d.\ examples from a distribution
$\D= \mathrm{EX}^{\mathrm{Mas}}(f, \D_{X}, \eta)$ on $\R^{d+1}$,
where $f$ is an unknown halfspace on $\R^d$,
it runs in $\poly(n, b, 1/\eps)$ time, where $b$ is an upper bound
on the bit complexity of the examples, and outputs a hypothesis $h$
that with high probability satisfies $\pr_{(x, y) \sim \D} [h(x) \neq y] \leq \eta +\eps$.
\end{theorem}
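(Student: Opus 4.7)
My high-level plan is to combine the Forster decomposition (promised in the abstract as the first main technical result of the paper) with the existing DGT19-style Massart halfspace learner, using the Forster transform on each piece to eliminate precisely the step that forced a $\poly(b)$ dependence in the sample complexity. Concretely, on input $n = \poly(d/\eps)$ samples I would first (approximately) compute the Forster decomposition of the marginal $\D_X$: a partition of $\R^d$ into $k = \poly(d)$ measurable pieces $S_1,\ldots,S_k$ together with linear maps $T_1,\ldots,T_k$ such that the pushforward of $\D_X$ conditioned on $S_i$ under $T_i$ is in (approximately) radially isotropic / Forster position, meaning each direction has mass at most $1/d$ on a tube of any fixed inverse-polynomial width. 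These are exactly the anti-concentration properties DGT19 needed but could only obtain by expensive LP-based rescaling whose precision depended on $b$.

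Next, I would argue that the Massart problem reduces piecewise. Conditioning on $S_i$ preserves the Massart noise condition point-by-point, and the target halfspace $\sgn(w\cdot x-\theta)$ pulls back under the invertible affine map $T_i$ to another halfspace, with Bayes-optimal error unchanged. So on each piece we face a Massart halfspace learning problem whose marginal is anti-concentrated in the Forster sense, with noise bound still $\eta$. For each such piece I would run (a light modification of) the DGT19 algorithm to obtain a hypothesis $h_i$ with conditional error at most $\eta+\eps$ on $S_i$. The final hypothesis is the piecewise rule $h(x) = h_i(T_i x)$ for $x\in S_i$; by the law of total probability its overall error is at most $\sum_i p_i(\eta+\eps) = \eta+\eps$, where $p_i = \Pr_{\D_X}[x\in S_i]$. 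Standard failure-probability boosting over the $k = \poly(d)$ pieces costs only an extra $\log$ factor in the sample size, which is absorbed.

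The main obstacle, and where the real work lies, is Step 3: showing that DGT19 applied on an (approximately) Forster-position distribution uses only $\poly(d/\eps)$ samples, with the remaining $\poly(b)$ factor pushed entirely into running time. In the original analysis, the sample complexity's $\poly(b)$ dependence arose from the algorithm's need to certify margin/anti-concentration of candidate directions on the empirical sample; producing such certificates via bit-complexity-dependent geometric rescaling cost both time and sample bits. Under Forster anti-concentration these certificates are essentially free at the distributional level: every halfspace with small $\ell_2$ weight has inverse-polynomial mass near its boundary, uniformly in the direction. This lets one replace the bit-sensitive certificate by a distribution-free uniform-convergence bound over the VC class of halfspaces, giving the clean $\poly(d/\eps)$ sample guarantee while still permitting $\poly(n,b,1/\eps)$ time for the computational side (e.g., solving the associated LPs for finding an approximately optimal halfspace on each piece). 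The remaining details — making the decomposition algorithmically robust to sampling error in estimating $\D_X$, verifying that a Forster transform applied to the examples respects the Massart structure, and bookkeeping failure probabilities across the $k$ pieces — are routine once the per-piece bit-free analysis is in place.
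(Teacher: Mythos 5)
There is a genuine gap, and it sits exactly where you write ``The main obstacle, and where the real work lies, is Step 3.'' You plan to first compute a one-shot Forster decomposition of $\D_X$ into $\poly(d)$ pieces, then run a (lightly modified) full DGT19 learner on each piece to get a classifier with error $\eta+\eps$ on that piece, and finally glue. The trouble is that the DGT19 primitive that the Forster transform actually unlocks is a \emph{weak/partial} learner: on a distribution with no $\Gamma$-outliers it returns a partial classifier $g$ that abstains on most of the mass, classifying only a $1/\poly(d,\Gamma,1/\eps)$ fraction with error $\eta+\eps'$ (this is Theorem~\ref{weakLearnerTheorem}). To turn this into a full classifier on a piece, DGT19 must recurse on the abstained mass --- and that abstained conditional distribution need \emph{not} be in (approximate) radial isotropic position anymore, even if the original piece was. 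So your claim that on a Forster-positioned piece DGT19 runs with $\poly(d/\eps)$ samples re-imports the very issue you are trying to remove: each recursive layer of DGT19 needs its own anti-concentration, and without re-transforming you are back to bit-complexity-dependent outlier removal.

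The fix, and the structure of the paper's actual argument (Algorithm~\ref{alg:main-algorithm-general}), is to \emph{interleave} rather than decompose-once: maintain a partial classifier $h_i$, and in each round (a) restrict to $\{x: h_i(x)=\ast\}$, (b) run the Forster decomposition algorithm (Theorem~\ref{TransformationAlgorithmThm}) on fresh samples from this conditional distribution to get a heavy subspace $V$ (containing at least a $\dim(V)/d$ fraction) and a transform $A$ with $\E[f_A(X)f_A(X)^T] \succeq (1/(4\dim V))\,I_V$, (c) run the DGT19 weak learner there to get a partial $g$, and (d) extend $h_i$ by $g\circ f_A$ on $V$. Each round covers a $1/\poly(d/\eps)$ fraction of the remaining mass with error $\eta+\eps/2$, so $\poly(d\log(1/\delta)/\eps)$ rounds suffice; that is why the number of ``pieces'' is $\poly(d/\eps)$, not $\poly(d)$ as you estimate, and why a sampling-based uniform-convergence statement (Proposition~\ref{ForsterVCProp}, a VC bound at dimension $O(d^2)$ over subspaces and degree-$2$ thresholds) is needed to guarantee the Forster transform computed on a fresh sample transfers to the true conditional distribution each round. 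A secondary (minor) slip: $f_A(x)=Ax/\|Ax\|_2$ is projective, so the pull-back preserves only \emph{homogeneous} halfspaces; you need the standard lift to $\R^{d+1}$ (which the theorem statement silently encodes) before running the pipeline.
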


\paragraph{Brief Overview of~\cite{DGT19} Algorithm}

To explain the source of our qualitative improvement,
we start with a high-level description of the previous algorithm
from~\cite{DGT19}. At a high-level, this algorithm works in two
steps: First, one designs an efficient learner for the special case
where the target halfspace has some non-trivial {\em anti-concentration}
(aka ``large'' margin). Then, one develops an efficient reduction of the
general (no margin) case to the large-margin case. Formally speaking,
such a reduction is not entirely ``black-box''; this description is for the purpose
of intuition.

\noindent First, we note that without loss of generality the target halfspace is homogeneous
(since we are working in the distribution-independent setting). 
The aforementioned reduction, used in~\cite{DGT19}, relies on a method
from~\cite{BlumFKV96} (refined in~\cite{DV:04}). The idea is to slightly modify
the distribution on the unlabeled points to guarantee
a  (weak) margin property. After this modification, there exists
an explicit margin parameter $\sigma = \Omega(1/\poly(d, b))$,
such that any hyperplane through the origin has %at least 
a non-trivial mass of the distribution at distance at least $\sigma$ standard deviations from it.
If $\sigma$ is a bound on the margin, the algorithm developed in step one
has sample complexity (and running time) $\poly(d, 1/\sigma, 1/\eps)$.
This is the source of the ``$b$-dependence''
in the sample complexity of the~\cite{DGT19} algorithm
(recalling that $\sigma = \Omega(1/\poly(d, b))$).

As we will explain below, the approach of~\cite{BlumFKV96, DV:04} inherently
leads to a ``$b$-dependence'' in the margin parameter $\sigma$.
\new{At a very high-level, the key to our improvement is to develop
a new efficient preprocessing routine that achieves $\sigma = \Omega(1/\poly(d))$.
This leads to the desired strongly polynomial sample complexity.}

\paragraph{Preprocessing Requirements}

Before we can get into the details of these preprocessing algorithms,
we first need to specify the anti-concentration property that we need to guarantee.
Essentially, our algorithms need there to be a decent \new{fraction} %number
of points that are reasonably far from the defining hyperplane. More specifically,
if the true separating hyperplane is defined by a linear function $L(x) = w^{\ast} \cdot x$
(for some weight vector with $\|w^{\ast}\|_2=1$),
we need that a non-trivial fraction of points $x$ (say, a $1/\poly(d)$-fraction) 
should have $|L(x)|$ be a non-trivial fraction of $\E[L^2(x)]^{1/2}$.

One might ask how we can hope to achieve such a guarantee 
without knowing the true separator $L(x)$ ahead of time.
This can be guaranteed if, for example, the points are in \emph{radial isotropic position}.
In particular, this means that for every point $x$ in the support of our distribution it holds that
$\|x\|_2=1$ and that $\E[xx^T] = (1/d) \, I_d$, where $I_d$ is the $d \times d$ identity matrix.
The latter condition implies that $\E[L^2(x)]=1/d$.
Combining this with the fact that $|L(x)|\leq 1$ for all points $x$ in the support,
it is not hard to see that with probability at least $1/d$ we have that $L^2(x)$ is at least $1/d$ ---
implying an appropriate anti-concentration bound. In fact, it will suffice to have our point set
be in {\em approximate} radial isotropic position, allowing $\E[xx^T]$ to merely be proportional to $(1/d) \, I_d$
and allowing $\|x\|_2$ to be more than one, as long as it satisfies some polynomial upper bound.
We note that the size of this upper bound will affect the quality of the anti-concentration result, and hence
the performance of the remainder of the algorithm.

Unfortunately, not all point sets are in radial isotropic position.
However, there is a natural way to try to fix this issue.
It is not hard to see that our original halfspace learning problem is invariant under linear transformations,
and it is a standard result that (unless our support lies in a proper subspace)
there always exists a linear transformation that can be applied 
to ensure that $\E[xx^T]=(1/d) \, I_d$. However, after applying this transformation, 
we may still have points whose norms are too large.
Essentially, we need to ensure that no point is too large in terms of {\em Mahalanobis distance}.
Namely, that if $\Sigma = \E[xx^T]$, we want to ensure that $|x^T  \Sigma^{-1} x|$ is never too large.
Unfortunately, in the data set we are given, this might still not be the case.

\paragraph{Preprocessing Routine of~\cite{BlumFKV96, DV:04}}
The key idea in \cite{BlumFKV96} and \cite{DV:04} is to find a core set $S$ of sample points
such that the Mahalanobis distance of the points in $S$ 
(with respect to the second-moment matrix of $S$) is not too large.
This will correspond to a reasonable sized sub-distribution of our original data distribution
on which our desired anti-concentration bounds hold, allowing our learning algorithm to learn the classifier
at least on this subset. In \cite{BlumFKV96}, it is shown that this can be achieved
by a simple iterative approach, where points with too-large Mahalanobis norm are repeatedly thrown away.
This step needs to be performed in several stages, as throwing away points will alter the second-moment matrix,
and thus change the norm in question. Interestingly, \cite{BlumFKV96} show that the number of iterations
required by this procedure can be bounded in terms of the numerical complexity of the points involved.

Unfortunately, in order to avoid throwing away too many of the original samples,
the procedure in \cite{BlumFKV96} needs to sacrifice the quality of the resulting anti-concentration bound.
This reduced quality will result in an increased sample complexity of the resulting algorithm,
\new{in particular causing it to scale polynomially with the bit complexity of the samples}. 
The subsequent work \cite{DV:04} makes some quantitative improvements 
to the ``outlier removal'' procedure, however the final anti-concentration quality 
still has polynomial dependence on the bit complexity of the inputs,
which is then passed on to the sample complexity of the resulting algorithm.
What is worse is that \cite{DV:04} prove a {\em lower bound} showing 
that \emph{any} outlier removal algorithm
must have a similar dependence on the bit complexity.

\paragraph{Our Approach}
The aforementioned lower bound of \cite{DV:04} shows that no combination of point removal
and linear transformation can put the input points into approximate radial isotropic position
without losing polynomial factors of the bit complexity in either the quality
or the fraction of remaining points. However, there is another operation
that we can apply to the data without affecting the underlying learning problem.

In particular, since we are dealing with homogeneous halfspaces (without loss of generality),
the problem will be unaffected by replacing a sample point $x$ with the point $\lambda \, x$
for any scaling factor $\lambda > 0$ (since $\sgn(L(x)) = \sgn(L(\lambda \, x))$ no matter what $L$ is).
This gives us another tool to leverage in our preprocessing step.

In particular, by applying an appropriate linear combination to our points,
we can ensure that they are in isotropic position 
\new{(i.e., having second-moment matrix $(1/d) \, I_d$)}.
Similarly, by rescaling individual points, we can ensure that our points are in radial position
(i.e., that $\|x\|_2=1$ for all $x$). The question we need to ask is whether by applying some combination
of these two \new{operations}, we can ensure that {\em both} of these conditions hold simultaneously.
In other words, we would like to find an invertible linear transformation $A$ such that after replacing
each point $x$ by the point $Ax/\|Ax\|_2$ (in order to make it unit-norm),
the resulting points are in (approximate) isotropic position.

The problem of finding such transformations was studied in early work by
Forster~\cite{Forster02} (see also~\cite{Barthe98})
who showed that it is possible to achieve {\em under certain assumptions},
including for example the natural setting where the points
are drawn from a continuous distribution. Unfortunately, there are cases
where appropriate linear transformations $A$ \new{do not exist}.
In particular, if there was some $d/3$-dimensional subspace that contained half of the points,
then after applying {\em any} such transformation to our dataset, this will still be the
case, and thus there will be a $d/3$-dimensional subspace over which the
trace of the covariance matrix is at least $1/2$. Interestingly, in a
refinement of Forster's work, the recent work \cite{HopkinsKLM20} showed
that this is the only thing that can go wrong. That is, a matrix $A$ exists
unless there is a $k$-dimensional subspace containing more
than a $k/d$-fraction of the points.

Suppose that we end-up in the latter case. Then, by restricting our attention to only the points
of this subspace (and a subspace of that, if necessary), we can always find a relatively large subset
of our initial dataset so that after applying a combination of linear transformations and pointwise-rescaling,
they can be put into radial isotropic position. 

\new{Of course, to take advantage of such a transformation, 
one must be able to find it efficiently. 
While prior work~\cite{HardtM13, AKS20-rip} has obtained algorithmic results for this problem,
 none appear to apply in quite the generality that we require.}
Our main algorithmic result is that such transformations exist
and can be efficiently (approximately) computed.

We start with the following simple definition:
\begin{definition}\label{def:lt}
Given an inner product space $V$ and an invertible linear transformation $A:V\rightarrow V$,
we define the mapping $f_A:(V\backslash\{0\}) \rightarrow(V\backslash\{0\}) $ by
$f_A(x) = Ax/\|Ax\|_2$.
\end{definition}

\noindent We can now state our main algorithmic result
(see Theorem~\ref{mainTheorem} for a more detailed statement):

\begin{theorem}[Algorithmic Generalized Forster Transform]\label{TransformationAlgorithmThm}
There exists an algorithm that, given a set $S$ of $n$ points in $\Z^d\backslash \{0\}$ of bit complexity
at most $b$ and $\delta>0$, runs in $\poly(n, d, b, \log(1/\delta))$ time and returns a subspace $V$ of $\R^d$
containing at least a $\dim(V)/d$-fraction of the points in $S$
and a linear transformation $A:V\rightarrow V$ such that
$
\frac{1}{|S\cap V|} \sum_{x\in S\cap V} f_A(x) f_A(x)^T = (1/\dim(V)) \, I_V + O(\delta) \;,
$
where the error is in \new{spectral} norm.
\end{theorem}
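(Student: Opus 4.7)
The plan is to combine a Forster-type iterative procedure on the full ambient space with an inductive restriction to a smaller subspace whenever that procedure certifies an obstruction. At each recursive level we maintain a working subspace $U$ and a subset $T$ of the original sample $S$ contained in $U$, and attempt to put $T$ into approximate radial isotropic position inside $U$; if we cannot, the Hopkins-Kane-Lovett-Moshkovitz criterion guarantees a strictly smaller subspace $W \subsetneq U$ on which to recurse, and we pass to $(W, T \cap W)$.

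First I would design the core subroutine. Given $(U, T)$ with $k = \dim(U)$, the goal is an invertible $A : U \to U$ such that $M(A) := (1/|T|) \sum_{x \in T} f_A(x) f_A(x)^T$ is within spectral distance $\delta$ of $(1/k) I_U$. Following the flavor of Hardt--Moitra and Artstein--Barthe-type arguments, I would iterate a multiplicative update of the form $A \leftarrow M(A)^{-1/2} A$ while tracking a potential such as $-\log\det M(A)$. Standard analyses give exponential contraction to equilibrium whenever no obstruction exists, so after $\poly(\log(1/\delta))$ steps at per-step precision $\poly(n, d, b)$ the procedure either converges within tolerance $\delta$ or its potential plateaus with a quantitatively detectable gap.

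Second I would convert a plateau into an explicit obstruction and recurse. If $M(A)$ has an eigenvalue substantially below $1/k$ despite the iteration making no progress, a disproportionate share of $T$ must lie essentially inside a proper subspace of $U$; I would extract a subset $T' \subseteq T$ with $|T'|/|T| > \dim(W)/k$, where $W := \mathrm{span}(T')$ is proper in $U$ and is spanned by integer vectors of bit complexity $b$, so $W$ has bit complexity $\poly(b, d)$ and is recoverable by exact linear algebra. The existence of such a $W$ whenever convergence fails is exactly what the HKLM criterion supplies. Recursing on $(W, T \cap W)$ preserves the invariant $|T|/|S| \geq \dim(U)/d$, since $|T \cap W|/|S| > (\dim(W)/\dim(U)) \cdot |T|/|S| \geq \dim(W)/d$, so when the subroutine eventually succeeds at some $V \subseteq \R^d$ the set $S \cap V$ has size at least a $\dim(V)/d$-fraction of $S$, as required. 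The recursion depth is at most $d$ and each level runs in $\poly(n, d, b, \log(1/\delta))$ time.

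The main obstacle I anticipate is the quantitative bridge in the second step: turning an approximate stall of $M(A)$ into an exact subspace $W$ spanned by integer points of $T$, together with a clean inequality $|T \cap W|/|T| > \dim(W)/k$. This demands a potential-function analysis with an initial potential bound expressed in terms of $b$, a sharp per-iteration progress guarantee away from equilibrium, and a rounding argument that upgrades near-degenerate eigendirections of $M(A)$ into an exact integer-spanned subspace. Controlling the bit complexity of the iterates $A$ and of the intermediate matrices $M(A)^{-1/2}$ across $\poly(n, d, b, \log(1/\delta))$ iterations is the other delicate piece, but should be manageable by truncating coordinates to $\poly(b, d, \log(1/\delta))$ bits at each step and performing the extraction of $W$ in exact arithmetic on the integer sample.
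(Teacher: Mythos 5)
Your high-level recursive structure (maintain a working subspace $U$ and a subset $T \subset S \cap U$, certify the ``no heavy subspace'' condition, and recurse onto a proper subspace $W$ carrying a large enough fraction of points when it fails) matches the paper, and your invariant $|T\cap W|/|S| > (\dim W/\dim U)\cdot|T|/|S| \ge \dim W / d$ is exactly the right bookkeeping. But the two subroutines you propose are genuinely different from the paper's, and one of them has a gap you do not close.

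The paper decouples the two tasks in the opposite order and avoids your hard step entirely. It first decides, \emph{exactly and without any reference to a Forster iteration}, whether a heavy subspace exists: Proposition~\ref{subspaceAlgorithmProp} sets up a feasibility LP with one variable per point and a constraint for each linearly independent basis, solved via the Ellipsoid method with a greedy matroid separation oracle, and either certifies that no proper subspace of $V$ captures a $\dim(W)/\dim(V)$-fraction or returns such a $W$. Only after the subspace is pinned down does it compute the scaling function: Proposition~\ref{scalingAlgorihtm} formulates $c^2(x)\,xx^T \preceq \frac{\dim V + \delta}{n}\sum_y c^2(y)\,yy^T$ directly as an SDP in the variables $c^2(x)$, and Appendix~\ref{app:scaling-sdp} bounds the bit complexity of a feasible point via the $(\eta,\delta)$-deep position theorem of~\cite{AKS20-rip} so that Ellipsoid terminates in $\poly(n,d,b,\log(1/\delta))$ time. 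The matrix $A$ is then read off as the inverse square root of the weighted second-moment matrix.

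Your proposal instead runs a Hardt--Moitra-style multiplicative update $A \leftarrow M(A)^{-1/2}A$ and, on a ``plateau,'' tries to round the near-degenerate eigendirections of $M(A)$ into an exact integer-spanned obstruction subspace $W$ with $|T\cap W|/|T| > \dim(W)/\dim(U)$. This is precisely the step you flag as ``the main obstacle I anticipate,'' and it is not a routine technicality: the HKLM dichotomy is sharp at the boundary, so the iteration can converge arbitrarily slowly when a subspace holds \emph{almost} but not quite a $\dim(W)/\dim(U)$-fraction, and distinguishing ``slow convergence, no obstruction'' from ``genuine obstruction'' within $\poly(\log(1/\delta))$ iterations requires a quantitative lower bound on how deep the point set is, which itself depends on $b$ in a way you have not pinned down. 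You would further need a rounding argument that takes a \emph{soft} eigengap of $M(A)$ to a \emph{hard} integer subspace while preserving the strict counting inequality; nothing in the iteration's potential analysis hands you that inequality, and it is exactly the content that the paper's matroid LP provides constructively. The paper itself remarks that the prior iterative-algorithm literature (Hardt--Moitra, Artstein--Karasev--Sharir) ``does not apply in quite the generality that we require,'' which is an indication that this extraction step is where those approaches get stuck. Until you supply (i) a potential bound expressed in $b$ with a per-step progress guarantee away from equilibrium, (ii) a stall-detection threshold with a proof that a stall implies an integer-spanned heavy subspace, and (iii) bit-complexity control of the iterates, the proposal is a plausible program but not a proof.
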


\noindent \new{By applying Theorem~\ref{TransformationAlgorithmThm} iteratively 
to the points of $S \setminus (S\cap V)$, we obtain a decomposition of $S$ into not too many subsets $T$, 
so that each $T$ has a Forster transform over the subspace which it spans.}

\subsection{Preliminaries} \label{ssec:prelims}

For $n \in \Z_+$, we denote $[n] \eqdef \{1, \ldots, n\}$.
We write $E \gtrsim F$ to denote that $E \geq c \, F$, where $c>0$
is a sufficiently large universal constant.

\new{For $V \subseteq \R^d$ and $f: \R^d \to \R$, we use
$\mathbf{1}_{V}(f)$ for the characteristic function of $f$ on $V$, i.e.,
$\mathbf{1}_{V}(f): V \to \{0, 1\}$ and $\mathbf{1}_{V}(f)(x) = 1$ iff $f(x) \neq 0$, $x \in V$.}

For a vector $x \in \R^d$, and $i \in [d]$, $x_i$ denotes the $i$-th coordinate of $x$, and
$\|x\|_2 \eqdef (\littlesum_{i=1}^d x_i^2)^{1/2}$ denotes the $\ell_2$-norm
of $x$. We will use $x \cdot y$ for the inner product between $x, y \in \R^d$.
For a (linear) subspace $V \subset \R^d$, we use $\dim(V)$ to denote its dimension.
For a set $S \subset \R^d$, $\mathrm{span}(S)$ will denote its linear span.

For a matrix $A \in \R^{d \times d}$, we use $\|A\|_2$ for its spectral norm and
$\tr(A)$ for its trace. For $A, B \in \R^{d \times d}$ we use $A \succeq B$ for
the Loewner order, indicating that $A-B$ is positive semidefinite (PSD).
\new{We denote by $I_d$ the $d \times d$ identity matrix and by $I_V$
the identity matrix on subspace $V$.}

We use $\E[X]$ for the expectation of $X$ and $\pr[\mathcal{E}]$
for the probability of event $\mathcal{E}$. 
\new{For a finite set $S$, we will use $x \sim_u S$ to denote that $x$ is drawn uniformly
at random from $S$.}

\section{Algorithmic Forster Decomposition: Proof of Theorem~\ref{TransformationAlgorithmThm}} \label{sec:forster}

Given a distribution $X$ \new{on $\R^d$}, our goal is to transform $X$ so that \new{the transformed distribution}
has good anti-concentration properties. Specifically, we would like it to be the case that for any direction $v$,
there is a \new{non-trivial} probability that $|v\cdot X|^2$ is at least a constant multiple of $\E[|v\cdot X|^2]$.
It is easy to see that this condition can be achieved as long as no particular value in the support of $X$
contributes too much to $\E[|v\cdot X|^2]$. In particular, it suffices that there exists some constant $B>0$
such that $|v\cdot x|^2 < B \, \E[|v\cdot X|^2]$ for all vectors $v$ and all $x$ in the support of $X$.
If this holds, it is easy to see that with at least $\Omega(1/B)$ probability
a randomly chosen $x \sim X$ satisfies $|v\cdot x|^2 \geq \E[|v\cdot X|^2]/2$.

Unfortunately, a given distribution $X$ may not have this desired property.
However, it seems in principle possible that one can modify $X$ so that it satisfies this property.
In particular, for any given weighting function $c:\R^d \to \R_+$,
we can replace the distribution $x \sim X$ with the distribution $c(x)x$ without affecting our linear classifier.
\new{Intuitively}, by scaling down the outliers, we might hope that this kind of scaling
would have the desired properties. This naturally leads us to a number of questions to be addressed:
\begin{enumerate}[leftmargin=*]
\item How do we know that such a weighting function $c$ exists?
\item If such a function exists, (how) can we efficiently compute a function $c$,
even for a discrete distribution $X$?
\item If $X$ has continuous support, how can we find a function $c$ that works for $X$,
given access to a small set of i.i.d.\ samples?
\end{enumerate}
To address these questions, it will be useful to understand the second moment (autocorrelation)
matrix of the transformed random variable $c(X) X$, i.e., $\Sigma = \E[(c(X)X)(c(X)X)^T]$.
Observe that our \new{desired} condition boils down to $|v\cdot c(x) x|^2 \leq B v \cdot \Sigma v$,
or equivalently
\begin{equation}\label{cBoundEqn}
c(x)\leq B\inf_{v \neq 0} \sqrt{(v^T\Sigma v)/|v\cdot x|^2} \;,
\end{equation}
for all points $x$ in the support of $X$.
We note that this setup forces us to strike a balance between $c(x)$ being large and $c(x)$ being small.
On the one hand, if $c(x)$ is too large, it will violate Equation~\eqref{cBoundEqn}.
On the other hand, if $c(x)$ is too small, the expectation of $c(X)^2 X X^T$ will fail to add up to $\Sigma$.
However, it is easy to see that making $\Sigma$ larger is never to our detriment;
that is, given $\Sigma$, we might as well take $c(x)$
so that equality holds in Equation~\eqref{cBoundEqn}.

%\inote{It's bad notation that $A$ is a matrix and $B$ is a scalar..}

\new{An additional technical difficulty here is related to}
the infimum term in Equation~\eqref{cBoundEqn}. This \new{issue} is somewhat simplified
by making a change of variables \new{so that the transformed autocorrelation matrix becomes equal to the identity},
i.e., $\Sigma = I$. In this case, Equation \eqref{cBoundEqn} reduces to the condition
$c(x) \leq B/\|x\|_2$, \new{for $x$ in the support of $X$.}
Changing variables back, we can see that the original equation is equivalent to
$c(x) \leq B/ \|\Sigma^{-1/2}x\|_2$; however, making the change of variables explicitly
will make it easier to relate this problem to existing work on Forster's theorem~\cite{Forster02}.

If we find a linear transformation $A$ such that the matrix
$\Sigma_A:=\E[f_A(X)f_A(X)^T]$ satisfies  $\Sigma_A \succeq (1/B) \, I$,
then since each value of $f_A(X)$ is a unit vector,
the distribution $f_A(X)$ will satisfy our anti-concentration condition.
Also observe that since $\tr(\Sigma_A)=1$, we cannot expect the parameter $B>0$ to be smaller than $\dim(V)$.
Moreover, even this may not be possible in general.
In particular, if a large fraction of the points \new{in the support} of $X$ lie on some proper subspace $W$,
most of the mass of $f_A(X)$ will lie in the subspace $AW$.
Therefore, the trace of $\Sigma_A$ along this subspace %\inote{Maybe define?}
will be more than $\dim(W)/\dim(V)$, forcing some of the other eigenvalues to be smaller than $1/\dim(V)$.

Interestingly, Forster~\cite{Forster02} showed that unless many points of $X$
have these kinds of linear dependencies, then a linear transformation $A$ with the desired properties
always exists. This condition was refined in a recent work~\cite{HopkinsKLM20} who proved the following existence theorem.

\begin{theorem}[Generalized Forster Transform]\label{forsterExistenceTheorem}
Let $X$ be a distribution with finite support on an inner product space $V$.
Then, unless there is a proper subspace $W$ of $V$ so that $\Pr[X\in W] \geq \dim(W)/\dim(V)$,
there exists an invertible linear transformation $A:V\rightarrow V$ such that $\Sigma_V = (1/\dim(V)) \, I$.
\end{theorem}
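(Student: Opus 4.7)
The plan is to obtain the desired $A$ as a critical point of a scale-invariant potential on the cone of positive definite matrices, following the spirit of Forster's original variational argument. Parameterizing $M = A^T A$, consider
\[
\Psi(M) \;=\; \dim(V) \sum_{x} \Pr[X = x] \, \log(x^T M x) \;-\; \log \det M,
\]
defined on positive definite $M$ on $V$. A direct check shows $\Psi(cM) = \Psi(M)$ for all $c > 0$, so we may normalize to $\det M = 1$ without loss of generality. Setting $\nabla \Psi = 0$ yields
\[
\dim(V) \sum_{x} \Pr[X=x]\, \frac{x x^T}{x^T M x} \;=\; M^{-1};
\]
taking $A := M^{1/2}$ so that $f_A(x) = M^{1/2} x / \|M^{1/2} x\|_2$, and conjugating both sides by $M^{1/2}$, this rearranges exactly into the isotropy condition $\sum_x \Pr[X=x]\, f_A(x) f_A(x)^T = (1/\dim(V))\, I_V$. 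Hence it suffices to exhibit a minimizer of $\Psi$ on unit-determinant positive definite matrices.

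Since $\Psi$ is continuous on the interior of the positive definite cone, existence of a minimizer will follow from coercivity: any sequence $M_n$ with $\det M_n = 1$ whose eigenvalues escape to $0$ or $\infty$ must have $\Psi(M_n) \to +\infty$. The hypothesis on $X$ enters here. As a guiding model, consider the one-parameter family
\[
M_t \;=\; t^{-1}\, \Pi_W \;+\; t^{\dim(W)/(\dim(V) - \dim(W))}\, \Pi_{W^\perp},
\]
where $\Pi_W$ is the orthogonal projector onto a proper subspace $W \subset V$. A direct calculation gives $\det M_t = 1$ and
\[
\Psi(M_t) \;=\; \log t \cdot \frac{\dim(V)^2 \bigl(\dim(W)/\dim(V) - \Pr[X \in W]\bigr)}{\dim(V) - \dim(W)} \;+\; O(1),
\]
so $\Psi(M_t) \to +\infty$ as $t \to +\infty$ precisely when $\Pr[X \in W] < \dim(W)/\dim(V)$. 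The theorem's hypothesis supplies exactly this strict inequality for every proper $W$. Upper-semicontinuity of $W \mapsto \Pr[X \in W]$ on the Grassmannian of each fixed dimension (which holds because $X$ has finite support) combined with compactness upgrades the pointwise strict inequality into a uniform slack $\delta > 0$ with $\Pr[X \in W] \leq \dim(W)/\dim(V) - \delta$ for every proper subspace $W$.

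To promote the single-parameter model to arbitrary divergent sequences, I would diagonalize $M_n = Q_n^T \mathrm{diag}(e^{t_{1,n}}, \ldots, e^{t_{\dim(V),n}}) Q_n$, pass to a subsequence where $Q_n$ converges and each eigenvalue has a definite escape rate, then partition the eigendirections into groups of comparable growth scale. Each group defines a limiting subspace $W^{(i)}$, and by telescoping from one scale to the next, the contribution of that group to $\Psi(M_n)$ is governed by the one-parameter formula above applied to $W^{(i)}$, with coefficient proportional to $\dim(W^{(i)})/\dim(V) - \Pr[X \in W^{(i)}]$. The uniform slack $\delta$ renders every such coefficient strictly positive, so the total tends to $+\infty$. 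Coercivity follows, a minimizer exists, and the critical-point equation produces the desired $A$.

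The main obstacle is executing the telescoping step cleanly: the single-parameter block-diagonal scaling is transparent, but general sequences mix rotating eigenframes with potentially many distinct escape rates that must be organized consistently. The right formalism groups escape rates into finitely many scales, extracts a nested family of limit subspaces $W^{(1)} \subset W^{(2)} \subset \cdots$, and inductively peels off one scale at a time, invoking the Grassmannian compactness argument at each stage so that the final bound does not depend on the specific subsequence chosen.
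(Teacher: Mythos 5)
Note first that the paper does not prove this theorem: it is imported verbatim from~\cite{HopkinsKLM20}, with the underlying variational machinery going back to Forster~\cite{Forster02} and Barthe~\cite{Barthe98}. So there is no in-paper proof to check you against; your attempt is a reconstruction of the known argument, and the skeleton is correct. The scale-invariance of $\Psi$, the fact that the stationarity condition conjugated by $M^{1/2}$ is exactly radial isotropy, the one-parameter model $M_t$ and the resulting coefficient $\tfrac{\dim(V)^2(\dim(W)/\dim(V)-\Pr[X\in W])}{\dim(V)-\dim(W)}$, and the existence of a uniform slack $\delta>0$ (for finite support, $\Pr[X\in W]$ takes only finitely many values across all $W$, so compactness of the Grassmannian is not even strictly needed) are all fine.

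The step you flag as ``the main obstacle'' is a genuine gap, but it closes once you notice that coercivity only requires \emph{lower} bounds on each term $\log(x^T M_n x)$. Diagonalize $M_n = Q_n^T \mathrm{diag}(e^{t^n_1},\dots,e^{t^n_{\dim V}}) Q_n$ with $t^n_1\le\cdots\le t^n_{\dim V}$ and $\sum_i t^n_i=0$, and pass to a subsequence where $Q_n\to Q$ (compactness of the orthogonal group handles the ``rotating eigenframes'' you worry about) and where the indices are partitioned into consecutive groups $G_1,\dots,G_m$ of sizes $k_1,\dots,k_m$ so that $t^n_i-t^n_{i'}=O(1)$ within a group while the gap between consecutive group scales tends to $+\infty$; escape forces $m\ge 2$. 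Set $W^{(j)}=Q^{-1}\big(\mathrm{span}\{e_i: i\in G_1\cup\cdots\cup G_j\}\big)$, a flag with $W^{(m)}=V$. For $x$ in the support with $j(x)=\min\{j: x\in W^{(j)}\}$, the largest index $i$ with $(Qx)_i\ne0$ lies in $G_{j(x)}$, and since $(Q_n x)_i\to(Qx)_i\ne 0$ one gets $\log(x^T M_n x)\ge s^n_{j(x)}+O(1)$, where $s^n_j$ denotes any representative scale of group $j$. Crucially, terms $e^{t^n_i}(Q_n x)_i^2$ with $i$ in a higher group but $(Qx)_i=0$ need not be controlled from above, since they can only increase $\log(x^T M_n x)$; this is the point your sketch leaves implicit and which makes the telescoping clean. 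Applying Abel summation to $\sum_x p(x)\, s^n_{j(x)}=\sum_j\big(\Pr[X\in W^{(j)}]-\Pr[X\in W^{(j-1)}]\big)s^n_j$ and to the determinant constraint $\sum_j k_j s^n_j=O(1)$ then gives
\[
\Psi(M_n)\;\ge\; \dim(V)\sum_{j=1}^{m-1}\big(s^n_{j+1}-s^n_j\big)\left[\frac{\dim W^{(j)}}{\dim(V)}-\Pr[X\in W^{(j)}]\right]+O(1)\;,
\]
where each bracket is at least $\delta>0$ and each $s^n_{j+1}-s^n_j\to+\infty$, so $\Psi(M_n)\to+\infty$. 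This is exactly the organized, scale-by-scale version of your one-parameter computation, and it completes the coercivity argument, hence the existence of a minimizer and of the transform $A=M^{1/2}$.
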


\new{Theorem~\ref{TransformationAlgorithmThm} is an algorithmic version of the above theorem.}
The proof has two main ingredients.
We start by handling the case of rescaling points rather than finding a linear transformation.
It turns out that handling this case is quite simple, as shown in the following result.

\begin{proposition}\label{scalingAlgorihtm}
There is an algorithm that, given a set $S$ of $n$ points in $\Z^d$ each of bit complexity at most $b$,
lying in a subspace $V \subseteq \R^d$, and a parameter $\delta>0$,
runs in \new{$\poly(n, d, b, \log(1/\delta))$} time and,
unless there is a proper subspace $W\subset V$
containing at least a $\dim(W)/\dim(V)$-fraction of the points in $S$,
returns a weight function $c: S \to \R^+$ such that for every %vector $w\in \R^d$ and
$x\in S$ we have that
\begin{equation}\label{SDPEqn}
c^2(x)  x x^T \preceq \frac{\dim(V) \new{+\delta}}{n} \littlesum_{y\in S} c^2(y) y y^T \;.
\end{equation}
%|w\cdot (c(x) x)|^2 \leq \frac{(\dim(V)+\delta)}{n}\sum_{y\in S}|w\cdot (c(y) y)|^2 \;.
\new{Moreover, the function $c^2$ takes integral values of bit complexity $\poly(n, d,b,\log(1/\delta))$.}
\end{proposition}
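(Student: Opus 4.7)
My plan is to recast (\ref{SDPEqn}) as a semidefinite feasibility problem and solve it by the ellipsoid method. Setting $w_x = c^2(x) \ge 0$ and $M(w) = \sum_{y \in S} w_y yy^T$, condition (\ref{SDPEqn}) becomes a family of $n$ linear matrix inequalities $w_x xx^T \preceq ((\dim(V)+\delta)/n)\, M(w)$ indexed by $x \in S$. Both sides are linear in $w$ and the condition is scale invariant, so after imposing a normalization such as $\sum_x w_x = n$ together with a box constraint coming from the bit complexity of the inputs, I obtain a bounded convex region in $\R^n_{\ge 0}$.

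To show nonemptiness of this region I would invoke Theorem \ref{forsterExistenceTheorem}: under the stated hypothesis, there is an invertible $A: V \to V$ with $(1/n)\sum_{x\in S} f_A(x)f_A(x)^T = I_V/\dim(V)$. A direct calculation using $f_A(x) = Ax/\|Ax\|_2$ shows that the weights $w_x^{\ast} := 1/\|Ax\|_2^2$ yield $M(w^{\ast}) = (n/\dim(V))(A^T A)^{-1}$ and therefore $w_x^{\ast} \, x^T M(w^{\ast})^{-1} x = \dim(V)/n$ for every $x \in S$. Thus $w^{\ast}$ satisfies (\ref{SDPEqn}) even with $\delta = 0$, so the $\delta$-feasible region contains $w^{\ast}$ together with a nontrivial neighborhood around it.

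Algorithmically, I would run the ellipsoid method with the following separation oracle: given a candidate $w$, compute $M(w)$ and, for each $x \in S$, find the top eigenvector $v_x$ of the symmetric matrix $w_x xx^T - ((\dim(V)+\delta)/n)\, M(w)$; if the corresponding eigenvalue is positive, then $v_x^T\bigl(w_x xx^T - ((\dim(V)+\delta)/n)\,M(w)\bigr)v_x \le 0$ is a valid linear cut on $w$. Combined with the outer ball from the normalization and box constraints and an inner ball around $w^{\ast}$, this gives $\poly(n,d,b,\log(1/\delta))$ iterations at $\poly(n,d,b)$ cost per iteration. To produce integer values of $c^2$, I would clear denominators in the rational $w$ returned by the ellipsoid method, absorbing any rounding error into $\delta$ by targeting tolerance $\delta/2$ in the feasibility stage.

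The main obstacle I expect is quantifying the radius of the inner ball: I need the width of the $\delta$-feasible region around $w^{\ast}$ to be at least $2^{-\poly(n,d,b)}\cdot \poly(\delta)$. This amounts to a numerical-complexity estimate on the Forster matrix $A$ of Theorem \ref{forsterExistenceTheorem}, which is only asserted to exist abstractly. I would obtain it either by a careful perturbation analysis of Forster's fixed-point condition, using that the inputs have integer entries of bit complexity at most $b$, or by invoking prior algorithmic results on Forster-type constructions; the rest of the argument is standard convex optimization machinery.
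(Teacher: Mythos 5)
Your approach matches the paper's: recast \eqref{SDPEqn} as an SDP feasibility problem in the weights $c^2(x)$, establish nonemptiness via Theorem~\ref{forsterExistenceTheorem} (your computation of $w^*_x = 1/\|Ax\|_2^2$ and of $M(w^*) = (n/\dim(V))(A^TA)^{-1}$ is exactly the paper's existence argument, in only slightly different notation), and solve by the ellipsoid method with an eigenvector-based separation oracle. All of that is right and is what the paper does.

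The gap is the one you flag yourself and then do not close: you need a $2^{-\poly(n,d,b)}$ lower bound on the inner-ball radius around $w^*$, equivalently a $2^{\poly(n,d,b)}$ upper bound on the ratio $\max_x w^*_x / \min_x w^*_x$, and you say you would get it ``by a careful perturbation analysis of Forster's fixed-point condition'' or ``by invoking prior algorithmic results on Forster-type constructions.'' The first route is nontrivial (Forster's fixed point does not directly give bit-complexity control), and the second is not specified. The paper makes this precise in Appendix~\ref{app:scaling-sdp} by invoking Theorem~1.5 of~\cite{AKS20-rip}: if the unit-normalized points are in $(\eta,\delta)$-\emph{deep position} --- every $\kappa$-dimensional subspace has at most $(1-\eta)\kappa n/\dim(V)$ points within Euclidean distance $\delta$ --- then the radial-isotropic rescaling factors lie in $[1,\ n/(\eta\delta)^{O(d)}]$. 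One then verifies deep position with $\eta = 1/(nd)$ and $\delta = 2^{-b}d^{-d}/(2d)$ using the fact that any $\dim(V)$ linearly independent integer points span a parallelepiped of Gram volume at least $1$, which after unit-normalization is still at least $2^{-bd}d^{-d}$, together with the no-heavy-subspace hypothesis. This yields $R = n^{\poly(b,d)}$ and hence the $\poly(n,d,b,\log(1/\delta))$ runtime and the integrality/bit-complexity claim for $c^2$. Without some such quantitative Forster theorem, the ellipsoid argument and the final bit-complexity assertion remain unproved, so this is a genuine missing step rather than a routine detail.
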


%\new{Prior work~\cite{HardtM13, AKS20-rip} had obtained algorithmic results very similar to Proposition~\ref{scalingAlgorihtm}. Since these prior results are somewhat difficult to interpret, we provide a direct proof below.}

\begin{proof}
We start by showing that such a weight function $c$ exists.
By Theorem \ref{forsterExistenceTheorem},
under the given condition on subspaces,
there must exist an invertible linear transformation $A: V \to V$ such that
$(1/n) \, \littlesum_{y\in S} f_A(y)f_A(y)^T = (1/\dim(V)) \, I_V $.
This means that for any $w\in V$ and $x\in S$, we have that
$|w\cdot (Ax/\|Ax\|_2)|^2 \leq (\dim(V)/n) \littlesum_{y\in S} |w\cdot (Ay/\|Ay\|_2)|^2$.
Rearranging, this implies that
$$
|A^T w \cdot (x/\|Ax\|_2)|^2 \leq (\dim(V)/n) \littlesum_{y\in S}|A^Tw \cdot (y/\|Ay\|_2)|^2 \;.
$$
Thus, letting $c(x) = 1/\|Ax\|_2$ causes our desired Equation~\eqref{SDPEqn} to hold for all vectors $A^Tw$.
Since $A$ is invertible, this covers all vectors in $V$. Moreover, since all points in $S$ lie in $V$,
that is sufficient to check in order to ensure that we satisfy Equation~\eqref{SDPEqn} in general.

We will show that we can efficiently compute values $c(x) > 0$ for each $x\in S$,
such that Equation~\eqref{SDPEqn} holds.
We note that this is just a semidefinite program (SDP) in the variables $c^2(x)$.
The constraints can be written as:
%\begin{equation}\label{eq:SDPconstraints}
$c^2(x)  x x^T \preceq (\dim(V)/n) \littlesum_{y\in S} c^2(y) y y^T$, for all $x\in S$,
%\end{equation}
and the positivity constraint can be written as $c(x)^2 \ge 1$ for all $x \in S$.
\new{It remains to argue that the above SDP can be solved efficiently in time $\poly(n, d,b,\log(1/\delta))$,
after relaxing the constraints by $\delta$ as in the theorem statement, via the Ellipsoid algorithm. This argument is
somewhat technical and is deferred to Appendix~\ref{app:scaling-sdp}.}
\end{proof}

It remains to handle the case when a proper subspace $W$ exists.
We show that one can identify such a subspace efficiently.

\begin{proposition}\label{subspaceAlgorithmProp}
There is a polynomial-time algorithm that, given a set \new{$S$} of $n$ points in $\Z^d$ of bit complexity $b$
all lying in a subspace $V \new{\subseteq \R^d}$, determines whether or not
there exists a proper subspace $W\subset V$ containing at least a $\dim(W)/\dim(V)$-fraction of the points \new{in $S$},
and if so returns such a subspace.
\end{proposition}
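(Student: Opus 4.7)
The plan is to reduce the problem to a matroid-density question on the linear matroid on $S$, and then solve it via submodular function minimization. Write $d' = \dim(V)$ and let $r(T) := \dim(\mathrm{span}(T))$ be the (submodular) rank function of this matroid. Finding a proper bad subspace $W$ is equivalent to finding a non-empty $T \subseteq S$ with $r(T) \le d'-1$ and $\phi(T) := d' \, |T| - n \cdot r(T) \ge 0$; we would then output $W = \mathrm{span}(T)$. Note that $\phi(\emptyset) = \phi(S) = 0$, so these trivial sets are always maximizers of $\phi$. A useful preliminary observation is that the flat closure $\bar T := S \cap \mathrm{span}(T)$ satisfies $r(\bar T) = r(T)$ and $|\bar T| \ge |T|$, hence $\phi(\bar T) \ge \phi(T)$; we may therefore restrict to flats, and a returned $T$ will automatically give $W = \mathrm{span}(T)$ with $|S \cap W| \ge |T|$.

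Since $r$ is submodular and $|T|$ is modular, $-\phi$ is submodular, so we can compute $\phi^* := \max_T \phi(T)$ together with a maximizer $T^*$ in polynomial time by running standard submodular function minimization on $-\phi$, with $r(T)$ evaluated via Gaussian elimination in $\poly(n,d,b)$. If $\phi^* > 0$, then $|T^*| > (n/d') \cdot r(T^*)$ together with $|T^*| \le n$ forces $r(T^*) < d'$, and $W = \mathrm{span}(T^*)$ is a proper bad subspace that can be returned immediately.

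The main obstacle is the boundary case $\phi^* = 0$, because $\emptyset$ and $S$ are tautologically maximizers and we must still decide whether some non-trivial maximizer with $r < d'$ exists. I would exploit the fact that the maximizers of $\phi$ are closed under union and intersection (by submodularity of $-\phi$), so for each $p \in S$ there is a unique smallest $\phi$-maximizer $T_p \ni p$, computable by submodular minimization of $-\phi$ restricted to sets containing $p$. A short argument shows $T_p$ must already be a flat: otherwise the strict inclusion $\overline{T_p} \supsetneq T_p$ would give $\phi(\overline{T_p}) > \phi(T_p)$, contradicting $\phi^* = 0$. Hence setting $W_p := \mathrm{span}(T_p)$ we have $T_p = S \cap W_p$ and $|S \cap W_p| = (n/d') \, \dim(W_p)$, so the algorithm can return $W_p$ for any $p$ with $W_p \ne V$ and otherwise correctly report that no non-trivial bad subspace exists. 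For correctness, if a non-trivial bad $W$ does exist, then $S \cap W$ is a non-empty $\phi$-maximizer with $r(S \cap W) < d'$; for any $p \in S \cap W$, minimality of $T_p$ yields $T_p \subseteq S \cap W$, so $W_p \subseteq W \subsetneq V$ and the algorithm succeeds.
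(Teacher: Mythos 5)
Your proof is correct, and it takes a genuinely different route from the paper. The paper writes a feasibility LP with a variable $v_i\in[0,1]$ per point and one constraint for every linearly independent basis $S'\subseteq S$; it solves this via the Ellipsoid method, with a separation oracle that greedily computes a maximum-weight basis in the linear matroid. Extracting a heavy subspace from a feasible $v$ then requires a sorting argument, and there is an off-by-one issue (the LP certifies $\ge\frac{n}{k}\kappa+1$ points rather than $\ge\frac{n}{k}\kappa$) that the paper patches by rerunning on modified point sets over all pairs, plus a duplication trick to reduce to the case $k\mid n$. You instead frame the problem directly as maximizing the supermodular matroid-density functional $\phi(T)=\dim(V)\,|T|-n\,r(T)$ and apply submodular function minimization to $-\phi$. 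This is cleaner: the threshold case $\phi^*=0$ (which roughly corresponds to the paper's off-by-one boundary) is handled uniformly by passing to the minimal minimizer containing each $p$, using the lattice structure of SFM minimizers, and the duplication trick for $k\nmid n$ disappears entirely because $\phi$ works over the rationals. Your flat-closure observation ($\phi(\bar T)\ge\phi(T)$, with strict inequality when $\bar T\supsetneq T$, which forces each $T_p$ to be a flat when $\phi^*=0$) and the correctness argument via $T_p\subseteq S\cap W$ are both right; the only prerequisite worth spelling out is that computing the minimal minimizer of a submodular function subject to a containment constraint is a standard refinement of SFM (contract $p$, then test each remaining element for membership in every minimizer), which keeps everything strongly polynomial given an evaluation oracle for $r$ via Gaussian elimination. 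The trade-off: the paper's argument is more elementary (LP plus greedy), while yours relies on the heavier but well-established SFM machinery and yields a somewhat tidier statement; both give the claimed $\poly(n,d,b)$ running time.
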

%\inote{Current proof requires $n$ to be a multiple of $\dim(V)$. Even so it is off by $1$.}

\begin{proof}
\new{Let $S = \{x^{(i)}\}_{i=1}^n \subset \R^d$, $V = \mathrm{span}(S)$, and $k = \dim(V)$.}
We will first show that if $n$ is a multiple of $k$, we can \new{efficiently}
find a ``heavy'' subspace $W$ of dimension $\kappa = \dim(W)$, \new{i.e., one} with at least
$\frac n k \kappa + 1$ points, if one exists.
To achieve this, we set up the following feasibility linear program (LP),
with a variable $v_i \in [0,1]$ for every point \new{$x^{(i)}$}.
We require that for any subset $\new{S'} \subseteq S$ of $k$ linearly independent vectors
the following linear inequality is satisfied
\begin{equation} \label{eqn:LP-feas-con}
\littlesum_{i=1}^n v_i \ge \frac n k \littlesum_{x_i \in \new{S'} } v_i + 1 \;.
\end{equation}

\paragraph{Efficient Computation}
We note that even though there are exponentially many constraints, the above LP can be solved efficiently
via the Ellipsoid algorithm. To show this, we provide a separation oracle that
given any guess $v \new{\in [0, 1]^n}$ efficiently identifies a violating constraint. %\inote{Phrasing?}.
In more detail, given any vector $v$, the linear independent basis \new{$\cal{B}$} that maximizes the RHS
\new{of~\eqref{eqn:LP-feas-con}} can be efficiently computed by a greedy algorithm.
Starting from the empty set, we repeatedly add one point at a time,
at each step choosing a point \new{$x^{(i)}$} of maximum $v_i$,
among the elements whose addition would preserve the independence of the augmented set.
The greedy algorithm correctly identifies \new{a} basis of maximum weight,
as the family of linearly independent subsets of points forms a matroid. %\inote{cite?}.

\paragraph{Feasibility}
We now show that the above LP will be feasible if and only if there exists a heavy subspace $W$.
\new{We start with the forward direction, i.e., assume the existence of a heavy subspace $W$.}
In this case, setting $v_i = 1$ if $\new{x^{(i)}} \in W$ and $v_i = 0$ otherwise,
we can see that all constraints of the LP are satisfied:
\begin{itemize}[leftmargin=*]
\item The LHS of \new{\eqref{eqn:LP-feas-con}}
is always at least $\frac n k \kappa + 1$, since there at least these many points on the subspace $W$.
\item The RHS of \new{\eqref{eqn:LP-feas-con}} is at most  $\frac n k \kappa + 1$,
as one can pick at most $\kappa$ points $\new{x^{(i)}}$ with corresponding values $1$.
\end{itemize}
For the reverse direction, if the LP is feasible and we can find a feasible vector $v$,
we can efficiently identify a heavy subspace $W$.
To do this we proceed as follows:
Assume w.l.o.g. that $v_1 \ge v_2 \ge  \ldots \ge  v_n$.
Run the greedy algorithm described above to find the basis \new{$\cal{B}$}
with points $x^{(i_1)}, x^{(i_2)}, \ldots, x^{(i_d)}$,
where $1 = i_1 < i_2 < \ldots < i_k$, that maximizes the RHS of \new{\eqref{eqn:LP-feas-con}} .
Then, we find some $\kappa$ such that $i_{\kappa + 1} > \frac n k \kappa + 1$.
As we will soon show, \new{such a $\kappa$} must always exist.
Having such a $\kappa$ means that the first $\frac n k \kappa + 1$ points lie in a $\kappa$-dimensional subspace $W$,
since otherwise the greedy algorithm would have picked the $(\kappa+1)$-th point in the basis earlier in the sequence.

We now argue by contradiction that some $\kappa \in [0,k-1]$
such that $i_{\kappa + 1} > \frac n k \kappa + 1$ must always exist.
Indeed, suppose that for all $\kappa$, $i_{\kappa + 1} \le \frac n k \kappa + 1$.
Then, we have that
$$v_{i_{\kappa + 1}} \ge v_{\frac n k \kappa + 1} \ge  \frac {\littlesum_{j=1}^{n/k} v_{\frac n k \kappa + j} } {n/k} \;.$$
Summing the above, \new{over} all $\kappa \in [0,k-1]$, we get that
$$\littlesum_{x_i \in B} v_i \ge \frac k n \littlesum_{i=1}^n v_i  \;,$$
which leads to the desired contradiction contradiction, as this implies that $v$ is infeasible.

In summary, we have shown that when $n$ is a multiple of $k$,
we can find a ``heavy'' subspace $W$ of dimension $\kappa = \dim(W)$
with at least $ \frac n k \kappa + 1$ points. This is off-by-one by the guarantee we were hoping for, which was to find a subspace with at least $ \frac n k \kappa$ points. We can address this issue by running our algorithm
on a modified point-set after replacing one point outside the subspace of interest with one inside to increase the number of inliers by 1. Even though we do not know which pair of points to change,
we can run the algorithm for all pairs of points until a solution is found.
We can also handle the general case where $n$ is not a multiple of $k$, by making $k$ copies of every point and running the algorithm above.
\end{proof}

We are now ready to prove Theorem~\ref{TransformationAlgorithmThm}.

\begin{proof}[Proof of Theorem~\ref{TransformationAlgorithmThm}]
The algorithm begins by iteratively applying the algorithm from Proposition \ref{subspaceAlgorithmProp}
until it finds a subspace $V$ containing at least a $k/d$-fraction of the points in $S$,
such that no proper subspace $W\subset V$ contains at least a $\dim(W)/\dim(V)$-fraction of the points in $S\cap V$.

We then apply \new{the scaling algorithm} of Proposition~\ref{scalingAlgorihtm} to find a \new{weight}
function $c:S\cap V\rightarrow \R^+$, and \new{consider the matrix}
$$
A = \left[\frac{1}{|S\cap V|}\littlesum_{x\in S\cap V} c(x)^2 xx^T\right]^{-1/2} \;.
$$
We note that Equation \eqref{SDPEqn} now reduces to the following
$$
c^2(x) |w\cdot x|^2 \leq (\dim(V)+\delta)\|A^{-1}w\|_2^2 \;,
$$
\new{for all vectors $w$.}
Setting $w=A^{2} x$, we obtain
$$
c^2(x) \|A x\|_2^4 \leq (\dim(V)+\delta) \|A x\|_2^2 \;,
$$
which gives
$$
c^2(x) \leq \frac{\dim(V)+\delta}{\|Ax\|_2^2}.
$$
On the other hand, we have that
$$
I_V = \frac{1}{|S\cap V|}\littlesum_{x\in S\cap V} c(x)^2 (Ax)(Ax)^T \preceq \frac{\dim(V)+\delta}{|S\cap V|} \littlesum_{x\in S\cap V} f_A(x)f_A(x)^T.
$$
By the above inequality, it follows that all eigenvalues $\lambda_1, \ldots, \lambda_{\dim(V)}$
of the matrix $\frac{1}{|S\cap V|} \littlesum_{x\in S\cap V} f_A(x)f_A(x)^T$ are at least $ \frac 1 {\dim(V)+\delta}$.
However, since $\frac{1}{|S\cap V|} \littlesum_{x\in S\cap V} f_A(x)f_A(x)^T$ has trace $1$,
we also have that $\littlesum_{i=1}^{\dim(V)} \lambda_i = 1$. This means that the maximum eigenvalue of this matrix
is at most $\frac{1+\delta} {\dim(V)+\delta}$.
Therefore, $\frac{1}{|S\cap V|} \littlesum_{x\in S\cap V} f_A(x)f_A(x)^T$ is within $O(\delta)$ of $(1/\dim(V)) \, I_V$
in spectral norm. This completes the proof of Theorem~\ref{TransformationAlgorithmThm}.
\end{proof}

The final ingredient that will be important for us is showing
that if we can find \new{a subspace $V$ and linear transformation $A$ (as specified in the statement of Theorem~\ref{TransformationAlgorithmThm})} that works for a sufficiently large set of i.i.d.
samples from the distribution $X$, then the same transform will work nearly as well for $X$.
This is established in the following proposition.

\begin{proposition}\label{ForsterVCProp}
Let $X$ be any distribution on $\R^d \setminus \{0\}$ and
\new{$S$ be a multiset of $n \gtrsim d^2/\eps^2 $ i.i.d.\ samples} from $X$.
Then with high probability over the choice of $S$ the following holds:
For every subspace $V$ of $\R^d$, every invertible linear transformation $A:V\rightarrow V$,
and any unit vector $w\in V$, we have that:
\begin{enumerate}
\item \label{item:ena} $|S\cap V|/|S| = \Pr[X\in V]+O(\eps).$
\item \label{item:dyo} $(1/|S|) \, \littlesum_{x \in S\cap V} |w\cdot f_A(x)|^2 = \E[\mathbf{1}_{V}(X)|w\cdot f_A(X)|^2]+O(\eps)$.
\end{enumerate}
\end{proposition}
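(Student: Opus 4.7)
The plan is to deduce both items from standard uniform convergence bounds, using that the relevant function classes have VC / pseudo-dimension polynomial in $d$.

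For item~\ref{item:ena}, I would use that the class $\mathcal{V} = \{\mathbf{1}_V : V \text{ linear subspace of } \R^d\}$ has VC dimension exactly $d$: the $d$ standard basis vectors are shattered (take $V$ to be the span of the desired subset), while $d+1$ points in general position cannot be shattered, since the labeling ``include exactly $d$ of them'' would force the subspace to be all of $\R^d$ and then contain the excluded point too. The classical VC uniform convergence theorem then guarantees that $n \gtrsim d/\eps^2$ i.i.d.\ samples suffice for $\sup_V \left| |S\cap V|/|S| - \Pr[X \in V] \right| = O(\eps)$ with high probability.

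For item~\ref{item:dyo}, I would bound the pseudo-dimension of the class $\mathcal{F} = \{x \mapsto \mathbf{1}_V(x)\, |w\cdot f_A(x)|^2\}$, indexed by subspaces $V \subseteq \R^d$, invertible $A: V\to V$, and unit vectors $w \in V$. Every function in $\mathcal{F}$ takes values in $[0,1]$ since both $w$ and $f_A(x) = Ax/\|Ax\|_2$ are unit vectors. For any threshold $t > 0$, the superlevel set is
$$\{x \in V : (w^T A x)^2 > t\, \|Ax\|_2^2\},$$
i.e., the intersection of a linear subspace (the zero locus of at most $d$ linear forms in $x$) with a single quadratic polynomial inequality in $x$. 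Extending $A$ to a $d\times d$ matrix that vanishes on $V^\perp$ and treating $(V, A, w, t)$ as living in a Euclidean space of dimension $O(d^2)$, the whole family of superlevel sets is cut out by $O(d)$ polynomial (in)equalities of degree at most $2$ in both $x$ and the parameters. The Warren--Milnor--Thom bound on the complexity of semi-algebraic classes then gives pseudo-dimension $O(d^2)$ for $\mathcal{F}$. Applying the standard uniform convergence theorem for bounded-pseudo-dimension, $[0,1]$-valued function classes yields
$$\sup_{V,A,w}\left| \frac{1}{|S|} \littlesum_{x \in S\cap V} |w\cdot f_A(x)|^2 - \E[\mathbf{1}_V(X)\, |w\cdot f_A(X)|^2] \right| = O(\eps)$$
with high probability, exactly once $|S| \gtrsim d^2/\eps^2$, matching the hypothesis of the proposition.

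The main obstacle to making this argument fully rigorous is the bookkeeping of parameter dependencies: $A$ and $w$ nominally live in spaces depending on $V$, so one must reformulate the problem uniformly in a single fixed-dimensional parameter space before the Warren--Milnor--Thom machinery can be invoked. Once this reformulation is made---by embedding $A$ as a degenerate $d \times d$ matrix supported on $V$ and $w$ as a vector in $\R^d$ constrained to $V$ by linear equations---the rest of the argument is a black-box invocation of uniform convergence for parametrized semi-algebraic classes, with no additional difficulty beyond checking that the degree and number of defining polynomials remain $O(1)$ and $O(d)$ respectively.
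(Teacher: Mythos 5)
Your argument is correct and follows essentially the same route as the paper: item~\ref{item:ena} via VC dimension $d$ for subspaces, and item~\ref{item:dyo} by observing that the superlevel sets of $x\mapsto \mathbf{1}_V(x)|w\cdot f_A(x)|^2$ are intersections of a subspace with a degree-$2$ polynomial threshold set, giving a bound of $O(d^2)$, then appealing to uniform convergence. The only cosmetic difference is that the paper bounds the VC dimension of the level sets $F_{V,A,w,t}$ directly (using VC composition plus the integral representation $\E[\mathbf{1}_V(Y)|w\cdot f_A(Y)|^2]=\int_0^1\Pr[Y\in F_{V,A,w,t}]\,dt$) rather than invoking Warren--Milnor--Thom bounds for pseudo-dimension of parametrized semi-algebraic classes, but this is the same idea packaged slightly differently.
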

\begin{proof}
The proof is by a simple application of the VC inequality (see, e.g.,~\cite{DL:01}), stated below.
\begin{fact}[VC Inequality]\label{thm:vc}
Let ${\cal F}$ be a class of Boolean functions with finite VC dimension $\mathrm{VC}({\cal F})$
and let a probability distribution $D$ over the domain of these functions.
For a set $S$ of $n$ independent samples from $D$, we have that
\begin{align*}
\sup_{f \in {\cal F}} \left| \pr_{X \sim S}[f(X)] - \pr_{X \sim D}[f(X)] \right|
\lesssim \sqrt{\frac{\mathrm{VC}({\cal F})}{n}} + \sqrt{\frac{\log(1/\tau)}{n} } \;,
\end{align*}
with probability at least $1-\tau$.
 \end{fact}
Item~\ref{item:ena} follows directly by noting that the set of vector subspaces of $\R^d$ has VC-dimension $d$.
Item~\ref{item:dyo} follows from the fact that the collection of sets
$$
F_{V,A,w,t}=\{x\in V: |w\cdot f_A(x)|^2 \geq t\}
$$
has VC-dimension $O(d^2)$. This holds for the following reason:  A set of this form
is the intersection of the subspace $V$ (which comes from a class of VC-dimension at most $d$),
with the set of points $x$ such that the quadratic polynomial $(w \cdot (Ax))^2 - t \|Ax\|_2^2$
is non-negative. Recall that the space of degree-$2$ threshold functions is a class of VC-dimension $O(d^2)$.
Therefore, by the VC inequality, with high probability, for each such $F$ we have that the fraction of $S$ in $F$
is within $O(\eps)$ of the probability that a random element of $X$ lies in $F$.
The claim now follows by noting that for a distribution $Y$ (either $X$ or the uniform distribution over $S$)
we have that
$$
\E[\mathbf{1}_{V}(Y)|w\cdot f_A(Y)|^2] = \int_{t=0}^1 \Pr[Y\in F_{V,A,w,t}] dt \;.
$$
This completes the proof of Proposition~\ref{ForsterVCProp}.
\end{proof}

\section{Application: Learning Halfspaces with Massart Noise} \label{sec:massart}

In this section, we show how to apply the machinery of Forster decompositions from Section~\ref{sec:forster} 
to PAC learn halfspaces with Massart noise. 
Specifically, we will show how to adapt the algorithm of ~\cite{DGT19}, by appropriately transforming the set of points it is run on, 
to obtain a new algorithm with strongly polynomial sample complexity guarantees. 

To that end, we will need the definition of an outlier and a partial classifier:

\begin{definition}($\Gamma$-Outlier) \label{def:outlier}
A point $x$ in the support of a distribution $X$ over a vector-space $V$ is called a $\Gamma$-\emph{outlier}, 
$\Gamma>0$, if there exists a vector $v\in V$ such that $|v\cdot x| > \Gamma \sqrt{\E[|v\cdot X|^2]}.$
\end{definition}

\begin{definition}(Partial Classifier) \label{def:pc}
A \emph{partial classifier} is a function $h:\R^d \to \{-1,\ast,1\}$. 
It can be thought of as acting as a classifier that for some input values returns an output in $\{\pm 1\}$, 
and for \new{the remaining} values returns $\ast$, as a way of saying ``I don't know''.
\end{definition}

%Given these definitions, 
A key step of the algorithm in~\cite{DGT19} 
for learning halfspaces with Massart noise, %with rate upper bound $\eta<1/2$, 
is computing a partial classifier that returns an output on a non-trivial fraction of inputs 
for which its error rate is at most $\eta+\eps$. 
The sample complexity and running-time of this algorithm 
depends polynomially on the size of the largest outlier in the distribution 
and the inverse of the accuracy parameter $1/\eps$. 
More specifically, the following theorem is implicit in the work of \cite{DGT19}.

\begin{theorem}[\cite{DGT19}]\label{weakLearnerTheorem}
Let $V$ be a vector space and $(X,Y)$ a distribution over $V\times \{\pm 1\}$, 
where $X$ does not have any $\Gamma$-outliers in its support. 
Suppose that there is a vector $w$ and $\eta \in (0, 1/2)$ such 
that for any given value $x$ it holds $\Pr[Y = \sgn(w \cdot x) \mid X=x] \geq 1-\eta$. 
In particular, $Y$ is given by a homogeneous halfspace with at most $\eta$ Massart noise. 
Then there exists an algorithm that, given $\eta,\Gamma$, and parameters $\eps',\delta' >0$, 
draws $\poly(\dim(V),\Gamma,1/\eps', \log(1/\delta'))$ samples from $(X,Y)$, 
runs in sample-polynomial time, and returns a partial classifier $h$ on $V$ 
such that with probability at least $1-\delta'$ the following holds:
(i) $\Pr[h(X) \neq \ast] > 1/\poly(\dim(V),\Gamma, 1/\eps', \log(1/\delta'))$, 
and (ii) $\Pr[h(X) \neq Y \mid h(X) \neq \ast] < \eta +\eps'$.
\end{theorem}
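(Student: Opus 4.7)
The plan is to adapt the weak-learner construction of \cite{DGT19} by substituting $\Gamma$ for the implicit margin parameter $1/\sigma$ that arises in their analysis from bit-complexity bounds. The starting point of \cite{DGT19} is the observation that, for a halfspace $f(x) = \sgn(w \cdot x)$ corrupted by Massart noise at rate at most $\eta$, the regression vector $\widehat{w} := \E[Y\,X]$ is correlated with the true separator:
\[
\widehat{w} \cdot w \;=\; \E\bigl[(1-2\eta(X))\,|w \cdot X|\bigr] \;\geq\; (1-2\eta)\,\E\bigl[|w \cdot X|\bigr].
\]
Their main structural lemma then shows that from such a $\widehat{w}$ one can construct, via a polynomial-time search over candidate directions and thresholds, a partial classifier of the form $h_{v,T}(x) = \sgn(v\cdot x)\,\mathbf{1}[|v\cdot x| > T]$ that achieves conditional error at most $\eta + \eps'$ on its non-$\ast$ region and covers a non-trivial mass. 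Both guarantees are governed in their analysis by an upper bound on $\max_{x} |v\cdot x|/\sqrt{\E[|v\cdot X|^2]}$ over candidate directions $v$ and points $x$ in the support of $X$.

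By Definition~\ref{def:outlier}, the no-$\Gamma$-outliers hypothesis states exactly that this ratio is at most $\Gamma$. The first step of my proof is therefore to verify that wherever the \cite{DGT19} analysis invokes the margin parameter, it does so only through this ratio, and then to substitute $\Gamma$ in its place. This immediately yields a population-level result: there exists a partial classifier of the above form achieving conditional error at most $\eta + \eps'/2$ and coverage at least $1/\poly(\dim(V),\Gamma,1/\eps')$, and the \cite{DGT19} algorithmic procedure (an LP/convex program over direction--threshold pairs) finds such a classifier from the exact distribution in polynomial time.

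The second step is a uniform convergence argument to pass from the population to the sample. Each set $\{x : |v\cdot x| > T\}$ is the complement of a symmetric slab in $\R^{\dim(V)}$, and the associated class of events has VC dimension $O(\dim(V))$; the same bound holds for the disagreement events $\{(x,y) : \sgn(v \cdot x) \neq y,\ |v \cdot x| > T\}$, since these are intersections of such slab complements with halfspaces. Applying Fact~\ref{thm:vc} shows that $n = \poly(\dim(V),\Gamma,1/\eps',\log(1/\delta'))$ samples suffice to estimate both the coverage and the numerator of the conditional error of every $h_{v,T}$ simultaneously, to within an additive $O(\eps'/\poly(\dim(V),\Gamma))$, with probability at least $1-\delta'$. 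Combined with the lower bound on coverage from the previous step, this controls the empirical conditional error to within $O(\eps')$, so that running the adapted \cite{DGT19} procedure on the sample and returning its output satisfies the stated guarantees after rescaling $\eps'$ by a constant.

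The main technical obstacle is the bookkeeping required in the first step: one must re-examine the \cite{DGT19} proofs to confirm that the margin parameter enters only through the ratio bounded by $\Gamma$ in Definition~\ref{def:outlier}, and never through a quantity that would genuinely require an explicit bit-complexity bound on the support of $X$. Modulo this verification, the remaining probabilistic and algorithmic content is essentially unchanged from \cite{DGT19}, and one obtains the polynomial sample complexity $\poly(\dim(V),\Gamma,1/\eps',\log(1/\delta'))$ claimed in the statement.
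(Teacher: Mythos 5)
The paper does not prove Theorem~\ref{weakLearnerTheorem}; it states it as ``implicit in the work of \cite{DGT19}'' and invokes it as a black box in the proof of Theorem~\ref{mainTheorem}. So there is no in-paper argument to compare your sketch against---what you are really being asked to do is reconstruct and re-parametrize the weak learner of \cite{DGT19}.

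Taken on those terms, two things should be flagged. First, your description of the \cite{DGT19} weak learner is not quite right. Their algorithm is not ``estimate $\widehat{w}=\E[YX]$, then search over direction--threshold pairs via an LP/convex program.'' It runs (projected) SGD on the non-convex $\LR$ surrogate loss $L_\lambda(w)=\E[\LR_\lambda(-Y\,(w\cdot X)/\|w\|_2)]$ at $\lambda\approx\eta$, and the structural lemma is about that loss: $L_\lambda(w^\ast)\leq -\Omega(\text{margin})$ under Massart noise, while $L_\lambda(v)>0$ whenever the thresholded error of $v$ exceeds $\lambda+\eps'$. The correlation identity $\widehat{w}\cdot w = \E[(1-2\eta(X))\,|w\cdot X|]$ you write down is correct, and it is a relevant motivating calculation, but it is not the engine of their analysis; the partial classifier $h_{v,T}$ you describe is the \emph{output} of their SGD procedure, not the result of a one-shot estimate plus exhaustive search.

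Second---and this is the real issue---the entire content of the theorem as stated in this paper is precisely the step you label ``the main technical obstacle'' and then defer: verifying that every appearance of the margin parameter $\sigma$ in the \cite{DGT19} analysis is through the ratio $\max_{x,v}|v\cdot x|/\sqrt{\E[|v\cdot X|^2]}$, so that the no-$\Gamma$-outliers hypothesis of Definition~\ref{def:outlier} can be substituted cleanly for the bit-complexity-derived $1/\sigma$. Your uniform-convergence paragraph (VC dimension $O(\dim(V))$ for slabs and halfspace intersections, Fact~\ref{thm:vc}) is fine and matches the kind of argument this paper uses elsewhere (cf.\ Proposition~\ref{ForsterVCProp}), but without carrying out the substitution inside the \cite{DGT19} proofs there is no way to confirm the claimed polynomial dependence $\poly(\dim(V),\Gamma,1/\eps',\log(1/\delta'))$ on $\Gamma$ rather than on the original bit complexity. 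The paper also asserts this without an argument, so you have identified the correct crux; you just have not closed it.
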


The main idea behind obtaining an efficient algorithm with strongly polynomial sample complexity 
is to repeatedly apply the Forster decomposition theorem from Section~\ref{sec:forster} 
to ensure that no large outliers exist on a ``heavy'' subspace.
Then, using Theorem~\ref{weakLearnerTheorem}, we can identify a partial classifier 
that has small misclassification error on a non-trivial fraction of the probability mass 
for which it outputs a $\{\pm 1\}$ prediction. By recursing on the remaining probability mass, 
we can ensure that we accurately predict the label for nearly all the distribution of points. 
This yields a misclassification error of at most $\eta+\eps$ in the entire space.
 
\medskip
 
The following theorem summarizes the main algorithmic learning result of this paper.

\newpage

\begin{theorem}[Main Learning Result]\label{mainTheorem}
Let $(X,Y)$ a distribution over $\R^d \times \{\pm 1\}$, where $Y$ is given by a homogeneous halfspace in $X$ 
with at most $\eta<1/2$ rate of Massart noise, and where the elements in the support of $X$ 
are all integers with bit complexity at most $b$. For parameters $\eps,\delta>0$, there exists an algorithm that draws 
$\poly(d,1/\eps,\log(1/\delta))$ samples from $(X,Y)$, runs in $\poly(d,b,1/\eps,\log(1/\delta))$ time, 
and with probability at least $1-\delta$ returns a classifier $h: \R^d \to \{\pm 1\}$ with misclassification error at most $\eta+\eps$.
\end{theorem}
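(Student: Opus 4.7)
The plan is to combine the Forster decomposition of Theorem~\ref{TransformationAlgorithmThm} with the weak learner of Theorem~\ref{weakLearnerTheorem} into a full learner whose sample complexity depends only on $d$ and $1/\eps$. The entire analysis hinges on drawing a single sample $S$ of size $n = \poly(d,1/\eps,\log(1/\delta))$ from $(X,Y)$ and using Proposition~\ref{ForsterVCProp} together with standard halfspace VC bounds to guarantee that every subspace, linear transformation, and classifier the algorithm ever constructs from $S$ generalizes to the true distribution up to additive error $O(\eps)$, uniformly and independently of the bit complexity $b$. This is the only place sample complexity enters, so the final bound is $b$-free.

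The algorithm is organized as an outer loop that produces a sequence of partial classifiers combined by a ``first non-$\ast$ wins'' rule. Each outer step operates on the (empirical and true) conditional distribution on the region where no prior partial classifier has fired. Within a step, I would iteratively apply Theorem~\ref{TransformationAlgorithmThm} to the active sample to generate Forster pieces $(V_1,A_1),\ldots,(V_m,A_m)$ with $m = \tilde O(d)$, where each subspace $V_j$ contains at least a $\dim(V_j)/d$ fraction of the currently active points and $f_{A_j}$ puts those points into $O(1/d)$-approximate radial isotropic position. Since every $f_{A_j}(x)$ is a unit vector and the transformed empirical second-moment matrix is within $O(1/d)$ of $(1/\dim(V_j))\, I_{V_j}$, no point of the conditional sample is a $\Gamma$-outlier for $\Gamma = O(\sqrt{d})$; Proposition~\ref{ForsterVCProp} lifts this statement to the true conditional distribution. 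Because homogeneous halfspaces with Massart noise are invariant under invertible linear maps (the unknown separator $w^\ast$ maps to $A_j^{-T} w^\ast$) and under positive per-point rescaling ($\sgn(w\cdot x)=\sgn(w\cdot \lambda x)$ for $\lambda>0$), Theorem~\ref{weakLearnerTheorem} applies to the $f_{A_j}$-transformed conditional distribution inside each $V_j$ and yields a partial classifier that we pull back to the original coordinates. Concatenating over $j=1,\ldots,m$ gives one outer step: a partial classifier firing on at least a $1/\poly(d,1/\eps)$ fraction of the active mass with conditional error at most $\eta+\eps/C$ for a parameter $C$ to be fixed.

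Since each outer step peels off a $1/\poly(d,1/\eps)$ fraction of the active mass, $R = \poly(d,1/\eps)\cdot\log(1/\eps)$ iterations suffice to drive the unclassified mass below $\eps/2$; on the residual mass we output an arbitrary label. Choosing $C$ large relative to the total number of invocations then yields overall misclassification error $\eta+\eps$ with probability $\geq 1-\delta$. The running time is $\poly(d,b,1/\eps,\log(1/\delta))$ because the only $b$-dependent step is solving the SDPs and LPs inside Theorem~\ref{TransformationAlgorithmThm}. The main obstacle is error and failure-probability bookkeeping: across the $\poly(d,1/\eps)$ invocations of the weak learner and the Forster decomposition, each must be run with accuracy $\eps' = \eps/\poly(d,\log(1/\eps))$ and failure probability $\delta'=\delta/\poly(d,1/\eps)$, and Proposition~\ref{ForsterVCProp} must hold simultaneously for every conditional distribution the recursion ever visits. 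This is secured by drawing $S$ once, at polynomial size, and applying the VC bounds uniformly over the union of all event classes the algorithm can interrogate.
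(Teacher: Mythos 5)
Your high-level plan matches the paper's: put the data into Forster position via Theorem~\ref{TransformationAlgorithmThm}, invoke the weak learner of Theorem~\ref{weakLearnerTheorem} on the resulting outlier-free conditional distribution, peel off the classified mass, and recurse until the unclassified region has mass $<\eps/2$. The per-step accounting (a $1/\poly(d,1/\eps)$-fraction peeled each step, $\Gamma=O(\sqrt{d})$ after the transform, invariance of homogeneous Massart halfspaces under both the linear map $A$ and per-point scaling) is all correct in spirit.

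The departure --- and the gap --- is the insistence on drawing a \emph{single} sample $S$ up front and running every iteration on restrictions of $S$, ``secured by $\ldots$ applying VC bounds uniformly over the union of all event classes the algorithm can interrogate.'' This is not actually established and is not straightforward. Theorem~\ref{weakLearnerTheorem} is stated as an algorithm that \emph{draws i.i.d.\ samples} from the conditional distribution $(X,Y)\mid h_i(X)=\ast$; its error guarantee is over fresh randomness, not over a fixed subsample of $S$. Because the region $\{x: h_i(x)=\ast\}$ is itself a function of $S$ (through the subspaces $V_j$, the transforms $A_j$, and the weak learner's own earlier outputs), the restriction of $S$ to this region is \emph{not} an i.i.d.\ sample from the conditional distribution, and applying Theorem~\ref{weakLearnerTheorem} to it as a black box is unsound. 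Fixing this would require either (a) re-proving the weak learner's guarantee in a fixed-sample form, together with an explicit bound on the VC dimension of the class of all possible composite partial classifiers $h_i$ (which in turn requires knowing the structural form of the weak learner's outputs --- not provided by the theorem statement), or (b) a careful peeling-with-adaptivity argument. You assert (a) happens but do not carry it out, and Proposition~\ref{ForsterVCProp} only covers the Forster-transform half, not the weak-learner half.

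The paper avoids this entirely: Algorithm~\ref{alg:main-algorithm-general} draws \emph{fresh} samples in each iteration of the while loop --- fresh points for the stopping test, fresh points (conditioned on $h_i(X)=\ast$ by rejection sampling) for the Forster step, and fresh samples for the weak learner. Since the number of iterations is $\poly(d\log(1/\delta)/\eps)$, a simple union bound over iterations keeps the total sample complexity at $\poly(d,1/\eps,\log(1/\delta))$, still independent of $b$, while each subroutine sees honestly i.i.d.\ data from the conditional distribution it targets. If you want to keep your single-sample route, you must supply the uniform-convergence argument over adaptive trajectories; otherwise adopt the fresh-sample structure, which costs nothing in the final bound and sidesteps the selection-bias problem.
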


%The algorithm itself is as follows:

\begin{algorithm}[H]
  \caption{Learning Algorithm for Massart Halfspaces}
  \label{alg:main-algorithm-general}
  \begin{algorithmic}[1]
    \State Let $C>0$ be a sufficiently large universal constant.
    \State Let $h_0:\R^d\rightarrow \{-1,\ast,1\}$ always return $\ast$.
    \State Let $i=0$
    \While{$\E_{x \new{\sim_u} \hat S} [h_i(x)=\ast] > \eps/3$ for a random sample $\hat S$ 
     of $C\log(d\eps /\delta)/\eps^2$ points $(X,Y)$}
      \State Let $S$ be a set of  $C d^4\log(1/\delta)$ samples from $X$ conditioned on $h_i(X)=\ast$.
      \State\label{ForsterLine} Run the algorithm from Theorem \ref{TransformationAlgorithmThm} on $S$ 
      to find a subspace $V$ and linear transformation $A:V\rightarrow V$ with 
      $\E_{x \new{\sim_u} S\cap V}[f_A(x)f_A(x)^T] > (1/(2\dim(V))) \, I_V.$
      \State\label{WeakLearnerLine} Obtain a partial classifier $g$ by running the algorithm from Theorem \ref{weakLearnerTheorem} with:
      $$\delta' = \delta / \poly(d\log(1/\delta)/\eps), \quad \eps' = \eps/2, \quad \text{and} \quad \Gamma = 4\dim(V)$$ 
      on the distribution $(f_A(X),Y)$ over all $(X,Y)$ conditioned on $X\in V$ and $h_i(X)=\ast$.
      \State Define a new partial classifier $$h_{i+1}(x) = \begin{cases}g(f_A(x)) & \textrm{if }h_i(x)=\ast\textrm{ and }x\in V \\ h_i(x) & \textrm{otherwise} \end{cases} $$
      \State Set $i\leftarrow i+1$.
    \EndWhile
    \State Return the classifier $h_i$.
  \end{algorithmic}
\end{algorithm}

\begin{proof}
In order to analyze Algorithm~\ref{alg:main-algorithm-general}, we will say that an event happens with ``high probability'' 
if it happens with probability at least $1-\delta/\poly(d\log(1/\delta)/\eps)$ for a sufficiently high degree polynomial. 
There are a number of events in each iteration of our while loop that we will want to show happen with high probability, 
and we will later claim that if they do for each iteration of the loop, then 
our algorithm will return an appropriate answer after $\poly(d \log(1/\delta)/\eps)$ iterations of the while loop. 
This will imply that with probability at least $1-\delta$ 
all high probability events occur and that our algorithm will return an appropriate answer.

We start by noting that with high probability the sample set chosen in the check of 
the while statement approximates the true probability that $h_i(X)=\ast$ to additive error at most $\eps/6$. 
This means that, with high probability, (1) we will not break out of the while loop 
unless this probability is less than $\eps/2$, and that (2) %that with high probability 
while we are in the while loop, $h(X)=\ast$ with probability at least $\eps/6.$ 
This latter statement implies that the expected number of samples from $(X,Y)$ 
needed in order to find one with $h(X)=\ast$ is $O(1/\eps)$. Assuming this holds, 
the set $S$ in the next line can be found with high probability by taking a polynomial number 
of samples from the original distribution.

Line \ref{ForsterLine} runs in deterministic $\poly(db\log(1/\delta)/\eps)$ time and, 
by Proposition \ref{ForsterVCProp}, with high probability finds a pair $V,A$ such that:
\begin{enumerate}
\item\label{coverageCondition} $\Pr[X\in V: h_i(X)=\ast] \geq 1/(2d)$.
\item\label{covarianceCondition} $\E[f_A(X)f_A(X)^T:X\in V, h_i(X)=\ast] > (1/(4\dim(V))) \, I_V$.
\end{enumerate}
If Condition \ref{coverageCondition} holds, then with high probability 
only polynomially many samples from $(X,Y)$ are needed to run the algorithm in Line \ref{WeakLearnerLine}. 
If Condition \ref{covarianceCondition} holds, then the conditional distribution has no $(4 \dim(V))$-outliers. 
Since $(f_A(X),Y)$ is a linear classifier with Massart noise $\eta$, with high probability we have 
that $\Pr[g(f_A(X))\neq \ast \mid X\in V, h_i(X)\neq \ast] > 1/\poly(d\log(1/\delta)/\eps)$ and
$$
\Pr \left[ g(f_A(X)) \neq Y \mid h_i(X)=\ast, X\in V, g(f_A(X))\neq \ast \right] < \eta + \eps/2 \;.
$$
The former statement implies along with Condition~\ref{coverageCondition} that
$$
\Pr[h_{i+1} \neq \ast \mid h_i(X)=\ast] > 1/\poly(d\log(1/\delta)/\eps) \;,
$$
and the latter implies that
$$
\Pr[h_{i+1}(X)\neq Y \mid h_{i+1}(X)\neq \ast] \leq \max(\eta+\eps/2,\Pr(h_{i}(X)\neq Y|h_{i}(X)\neq \ast)) \;.
$$
If the above hold for every iteration of the while loop, 
then after $T=\poly(d\log(1/\delta)/\eps)$ iterations, we will have that $\Pr[h_T(X)=\ast] < \eps/6$ 
(and thus with high probability we will break out of the loop in the next iteration), and when we do break out,
it holds that
$$
\Pr[h_i(X) \neq Y] \leq \Pr[h_i(X)\neq Y \mid h_i(X) \neq \ast]+ 
\Pr[h_i(X)=\ast] \leq (\eta+\eps/2)+(\eps/2) = \eta+\eps \;.
$$
This completes the proof of Theorem \ref{mainTheorem}.
\end{proof}

\newpage

\bibliographystyle{alpha}
\bibliography{allrefs}

\newpage

\appendix

\section*{APPENDIX} \label{sec:app}

\section{Omitted Details from Proposition~\ref{scalingAlgorihtm}} \label{app:scaling-sdp}

Relaxing the constraints of the SDP by $\delta$ guarantees that if the original SDP has a solution $\{c^2(x)\}$, then the new SDP will have a solution set containing a box of volume at least $(\delta/\dim(V))^d$ defined with variables $c_0^2(x)$ satisfying $c^2(x) \leq c_0^2(x) \leq (1+\delta/\dim(V))c^2(x)$. It is easy to see that these solutions satisfy the necessary constraints. In order to show that the ellipsoid algorithm will work, it will suffice to show that this box can be taken to be contained in a ball of radius $R$. This will imply that the ellipsoid algorithm will find a solution to the relaxed SDP assuming one existed for the original in time $\poly(d,\log(R/\delta))$. We will in fact show, using a more refined version of Forster's theorem from~\cite{AKS20-rip}, 
that $R$ can be taken to be $n^{\poly(b,d)}$.

We leverage (i) the constraint that no subspace of dimension $\kappa$ contains \new{more} than $\kappa \cdot \dim(V)/n$ points
and (ii) that all coordinates are integers bounded by $2^b$, to argue that the function $c^2$ must take values
within a bounded range.
We will first prove this for the case where $\dim(V)=d$,
and argue that the same bound holds for the general case.

Towards this end, we will use Theorem 1.5 from~\cite{AKS20-rip},
which states that if one has a collection of unit-norm points which are in ``$(\eta, \delta)$-deep position'',
they can be brought in radial isotropic position by rescaling points with factors between $1$ and
$n /(\eta\delta)^{O(d)}$.

For a (unit-norm) point-set $X$ to be $(\eta, \delta)$-deep according to the standard radial isotropic transformation,
they require that for any subspace $E^\kappa$ of dimension $\kappa$, the number of points lying within
Euclidean distance $\delta$ from that subspace is at most $(1-\eta) \kappa n/d$, i.e., for the set
$E^\kappa_\delta = \{x \in X: d(x,E^\kappa) \le \delta\}$ it holds $|E^\kappa_\delta| \le (1-\eta) \kappa n/d$.

We now show that the condition is satisfied for $\eta = 1/(nd)$ and $\delta = \frac 1{2d} 2^{-b} d^{-d}$.
Since for any set $S$ of $d$ linearly-independent points with integer coordinates,
the determinant $|X_S|$ is at least $1$, after renormalizing so that all points are unit norm,
the determinant is at least $\Delta = 2^{-bd} d^{-d}.$ As it shown in Lemma 4.6 of \cite{AKS20-rip},
choosing $\delta = \sqrt{\Delta} / (2d)$ ensures that the set $E^\kappa_\delta$ lies in a $\kappa$-dimensional subspace.
Moreover, since for any set $S$ of points lying in a $\kappa$-dimensional subspace for $\kappa < \dim(V)$,
it holds that $|S| < \kappa n/\dim(V)$, this implies that $|S| \le (1-1/(n d)) \kappa n/\dim(V)$.
Thus, for our choice of $\kappa$ and $\delta$, the given point-set is $(\eta, \delta)$-deep.

The same argument goes through if the points lie on a subspace of dimension $\dim(V) < d$.
The only subtle point is bounding the $\dim(V)$-dimensional volume of any parallelepiped
defined by $\dim(V)$ linearly independent points. This corresponded to the absolute value
of the determinant when the point-set was full dimensional.
This volume is given by $\sqrt{ |\det{X_S^T X_S}| }$,
where $X_S$ is the matrix with the points in $S$ written as columns.
For any point-set of integer coordinates this determinant is at least $1$.
After renormalizing all the points so that they have unit-norm,
this determinant is at least $2^{-\new{d}b} d^{-d}$, as before.

Overall, we get that the renormalizing factors $c^2$ are between $1$ and $n^{\mathrm{poly}(b,d)}$.

\end{document}